\SetMathAlphabet{\mathrm}{bold}{T1}{times}{b}{n}
\SetMathAlphabet{\mathit}{bold}{T1}{times}{b}{it}
\SetMathAlphabet{\mathbf}{bold}{T1}{times}{b}{n}
\SetMathAlphabet{\mathtt}{bold}{OT1}{pcr}{b}{n}
\renewcommand\boldmath{\@nomath\boldmath\mathversion{bold}}}
\def\BibTeX{{\rm B\kern-.05em{\sc i\kern-.025em b}\kern-.08em
    T\kern-.1667em\lower.7ex\hbox{E}\kern-.125emX}}
\newtheorem{theorem}{Theorem}[section]
\newtheorem{remark}[theorem]{Remark}
\newcommand{\norm}[1]{\lVert#1\rVert}
\DeclareMathOperator*{\argmin}{arg\,min}
\newcommand{\ew}{\mathbb{E}}
\newcommand{\loss}{\mathcal{L}}
\newcommand{\R}{\mathbb{R}}
\newcommand{\J}{\mathcal{J}}
\newcommand{\id}{\operatorname{Id}}
\newcommand{\ny}{\texttt{NN2I[y]}}
\newcommand{\nz}{\texttt{NN2I[z]}}
\newcommand{\nys}{\texttt{NN2Is[y]}}
\newcommand{\nzs}{\texttt{NN2Is[z]}}
\newcommand{\nyt}{\texttt{NN2N[y]}}
\newcommand{\nzt}{\texttt{NN2N[z]}}
\newcommand{\nii}{\texttt{N2I}}
\newcommand{\noisea}{\xi}
\newcommand{\noiseb}{\eta}
\newcommand{\std}{\delta}
\newcommand{\cl}{\sigma}
\begin{document}

\title{Noisier2Inverse: Self-Supervised Learning for Image Reconstruction with Correlated Noise}

\author[1,2]{Nadja Gruber}
\author[3]{Johannes Schwab}
\author[3]{Markus Haltmeier}
\author[4]{Ander Biguri}
\author[1,2]{Clemens Dlaska}
\author[5]{Gyeongha Hwang}

\affil[1]{Digital Cardiology Lab, Medical University of Innsbruck, Austria}
\affil[2]{University Clinic of Internal Medicine III, Cardiology and Angiology, Medical University of Innsbruck, Austria}
\affil[3]{Department of Mathematics, University of Innsbruck, Austria}
\affil[4]{Department of Applied Mathematics and Theoretical Physics (DAMTP), University of Cambridge, UK}
\affil[5]{Department of Mathematics, Yeungnam University, Gyeongsan, Gyeongbuk 38541, South Korea}

%\author{N. Gruber}
%\address{Digital Cardiology Lab, Medical University of Innsbruck, Austria}
%\address{University Clinic of Internal Medicine III, Cardiology and Angiology, Medical University of Innsbruck, Austria}

%\author{
%\uppercase{N. Gruber}\authorrefmark{1, 2}, 
%\uppercase{J. Schwab}\authorrefmark{3}, 
%\uppercase{M. Haltmeier}\authorrefmark{3},
%\uppercase{A. Biguri}\authorrefmark{4},
%\uppercase{C. Dlaska}\authorrefmark{1, 2},
%and \uppercase{G. Hwang}\authorrefmark{5}
%}

%\address[1]{Digital Cardiology Lab, Medical University of Innsbruck, Austria}
%\address[2]{University Clinic of Internal Medicine III, Cardiology and Angiology, Medical University of Innsbruck, Austria}
%\address[3]{Department of Mathematics, University of Innsbruck, Austria}
%\address[4]{Department of Applied Mathematics and Theoretical Physics (DAMTP), University of Cambridge, UK}
%\address[5]{Department of Mathematics, Yeungnam University, Gyeongsan, Gyeongbuk 38541, South Korea}
%\tfootnote{The work of G. Hwang has been supported by the National Research Foundation of Korea (NRF) grant funded by the Korea government (MSIT) (RS-2024-00333393).}

%\tfootnote{This work has been supported by the Austrian Science Foundation (FWF),  project DOC 110.}

\maketitle

\begin{abstract}
We propose Noisier2Inverse, a correction-free self-supervised deep learning approach for general inverse problems. The proposed method learns a reconstruction function without the need for ground truth samples and is applicable in cases where measurement noise is statistically correlated. This includes computed tomography, where detector imperfections or photon scattering create correlated noise patterns, as well as microscopy and seismic imaging, where physical interactions during measurement introduce dependencies in the noise structure. Similar to Noisier2Noise, a key step in our approach is the generation of noisier data from which the reconstruction network learns. However, unlike Noisier2Noise, the proposed loss function operates in measurement space and is trained to recover an extrapolated image instead of the original noisy one. This eliminates the need for an extrapolation step during inference, which would otherwise suffer from ill-posedness. We numerically demonstrate that our method clearly outperforms previous self-supervised approaches that account for correlated noise.
\end{abstract}

\smallskip
\noindent \textbf{Keywords.} Self-supervision, image reconstruction,  incomplete data, ill-posedness; inverse problems, sparse data, one-step method.

\section{Introduction}

Image reconstruction is central for various imaging modalities, including X-ray Tomography, Photoacoustic Tomography, Magnetic Resonance Imaging (MRI), Seismic Imaging, and Cryo-Electron Tomography. In many cases, the relationship between the ideal measured data and the reconstructed image can be modeled as a linear operator, resulting in an inverse problem of the form
\begin{equation} \label{eq:ip}
	y = A x + \noisea \,,
\end{equation}
where $x \in \mathbb{R}^{n}$ is the desired unknown, $\noisea\in\R^m$ is the noise, $y \in \mathbb{R}^m$ is the measured data, and $A \colon \mathbb{R}^n \to \mathbb{R}^m$ is the forward map. Due to ill-conditioning of $A$, unregularized reconstruction methods are unstable with respect to noise \cite{38288208ad764ab28d3b086f7a52e54f,scherzer2009variational,hansen2010discrete}, leading to significant errors in the reconstructed image. In addition, they suffer from incomplete data, making accurate image formation even more challenging. Therefore, finding an effective and accurate reconstruction function $B \colon \mathbb{R}^m \to \mathbb{R}^n$ is both relevant and challenging.

In many real-world inverse problems, various optical, physical, and procedural factors introduce spatially correlated noise. For example, in computed tomography (CT), several factors contribute to spatial correlation \cite{barrett2004artifacts,diwakar2018review}. Metallic implants lead to beam hardening and scattering, introducing structured noise with a complex distribution \cite{whiting2006properties}, often with unknown parameters. Non-uniform detector sensitivities or defects result in fixed-pattern noise, while beam hardening and X-ray scattering contribute to spatially varying noise, especially under low-dose conditions. Furthermore, reconstruction artifacts arise from backprojection, which spreads noise from individual detector pixels across lines in the reconstructed image, creating statistical dependencies among pixels. Similarly, in microscopic imaging, spatially correlated noise arises from various sources. Optical aberrations caused by lens imperfections distort light paths, introducing noise patterns. Sample-dependent factors, such as refractive index variations, autofluorescence, or sample thickness, can mimic noise. Detector non-uniformities result in fixed-pattern noise, while environmental and mechanical factors, including vibrations, sample drift, and laser instabilities, contribute to motion blur and uneven illumination.

\subsection{Learned image reconstruction}

Learning-based reconstruction methods achieve state-of-the-art performance in a wide range of inverse imaging problems, from medical imaging to computational photography \cite{ongie2020deep,arridge2019solving,haltmeier2023regularization}. However, these approaches typically rely on training data  $\{ (x_t, y_t) \colon  t =1, \dots, T \} $ consisting of paired clear ground-truth images $x_t$ and noisy measurements $y_t$, allowing the reconstruction function to calibrate such that $B(y_t) \simeq x_t$.  
However, collecting ground-truth images can be costly or even unattainable, especially in medical and scientific imaging. To address this challenge, self-supervised learning techniques have emerged as a promising alternative, using only noisy and potentially incomplete measurement data $\{ y_t \colon  t =1, \dots, T \} $ for training \cite{ millard2024clean, blumenthal2024self, krull2019noise2void,hendriksen2020noise2inverse,tachella2024unsure}. This circumvents the need for fully sampled training datasets with corresponding clean targets, making it applicable to scenarios where clean ground-truth data are difficult or impossible to obtain. Unlike supervised learning, which uses information in the data pairings, self-supervised learning exploits inherent structures or relationships within the measurement data for training.

Several self-supervised imaging methods have been proposed for specific scenarios. Noise2Noise~\cite{lehtinen2018noise2noise} trains models using pairs of noisy realizations of the same signal. Although effective, multiple realizations of the same object are often unavailable in practice. Noise2Self \cite{batson2019noise2self} and Noise2Void~\cite{krull2019noise2void} overcome this limitation by only requiring noisy observations of the same object. However, they assume that noise in one subgroup of pixels is statistically independent of noise in another. Noise2Inverse~\cite{hendriksen2020noise2inverse}, an extension of Noise2Self designed for inverse problems, also assumes uncorrelated noise (in the data domain).   Noisier2Noise \cite{yaman2020self} allows correlated noise but requires a two-step approach for general inverse problems, where image formation and denoising are separated. These methods also face the challenge of a fixed null-space component due to the ill-posedness of the inverse problem.

\subsection{Main contributions}

To address the limitations of existing self-supervised techniques, we propose Noisier2Inverse, a one-step method for solving inverse problems with correlated noise. Given noisy data $y_t$, we create noisier data $y_t + \eta_t$ and perform inference $(f_\theta^X \circ B^\sharp) ( y_t + \eta_t)$ using a fixed  initial reconstruction map $B^\sharp \colon \R^m \to \R^n$ and a neural network $f_\theta^X \colon \R^n \to \R^n$ near minimizing the loss function
\begin{equation} \label{eq:nn2iD}
\loss(\theta) \triangleq \sum_{t=1}^T \| A [ (f_\theta^X \circ B^\sharp) ( y_t + \eta_t)  ] -  (y_t - \eta_t) \|_2^2 \,.
\end{equation}
It shares similarities with the Noisier2Noise framework in the sense that both use noisier images $y_t + \eta_t$ for the network training. Noisier2Noise, however, is designed for the case $A= \id$, minimizes $\sum_{t=1}^T \|  f_\theta ( y_t + \eta_t)  ] - y_t   \|_2^2$  and inference is done by $
2 f_\theta (y_t + \eta_t) - (y_t + \eta_t)$. 
Note that, in practice, $\noiseb_t$ is resampled at each training iteration.
Noisier2Inverse differs in two key aspects, significantly improving  performance: 

\begin{itemize}
\item \textbf{One-step approach:}  
When using Noisier2Noise directly in the context of image reconstruction, one has to follow a two-step procedure. This can be done either in a preprocessing manner, $B = B^\sharp \circ f^Y_\theta$, with $f^Y_\theta$ minimizing $\sum_{t=1}^T \| f^Y_\theta (y_t + \eta_t) - y_t \|_2^2$, or in a postprocessing manner, $B = f^X_\theta \circ B^\sharp$, with $f^X_\theta$ minimizing $\sum_{t=1}^T \| f^X_\theta (B^\sharp y_t + B^\sharp \eta_t) - B^\sharp y_t \|_2^2$. In both modes, it is impossible to restore image features outside the range of $B^\sharp$, resulting in severe reconstruction artifacts. Noisier2Inverse overcomes these limitations by using a one-step approach, targeting $f^X_\theta \circ B^\sharp$ in image space but with a loss in the measurement domain.

\item \textbf{Extrapolation-free:}  
Noisier2Noise generalized to a one-step method  for image reconstruction minimizes  
$\sum_{t=1}^T \| A [ f^X_\theta \circ B^\sharp (y_t + \eta_t) ] - y_t \|_2^2$  
in combination with an extrapolation step included in the inference 
$2 f^X_\theta \circ B^\sharp (y + \eta) - B^\sharp (y + \eta)$ (see Remark~\ref{rem:n2n-onestep}).  
However, subtracting the initial reconstruction $B^\sharp (y + \eta)$ requires a trade-off between the stability and accuracy of the initial reconstruction.   In contrast, Noisier2Inverse does not require such an extrapolation and thus is free from this hurdle. The key idea is to target $y_t - \eta_t$ instead of $y_t$ via the network. While targeting $y_t$ (given noisy data) with input $y_t + \eta_t$ (noisier data), Noisier2Noise only performs half of the desired denoising, as it fails to distinguish between the original noise $\noisea_t$ and the added noise $\noiseb_t$ requires the extrapolation step of adding $f^X_\theta \circ B^\sharp (y + \eta) - B^\sharp (y + \eta)$ to $f^X_\theta \circ B^\sharp (y + \eta)$. In contrast, Noisier2Inverse targets $y_t - \eta_t$, and although it again cannot distinguish between the original and added noise, targeting $y_t - \eta_t$ will naturally yield noise-free data.         
\end{itemize}

We emphasize that we do not suggest strictly minimizing the loss in \eqref{eq:nn2iD}, as this may result in severe overfitting and the amplification of noise \cite{scherzer2009variational}. Thus, we propose including additional regularization in the form of early stopping when using minimization algorithms for \eqref{eq:nn2iD}, as will be described in Remark \ref{rem:stopping}.

Our presented experimental results for CT image reconstruction clearly show that Noisier2Inverse outperforms both Noisier2Noise (even after adjustment to one-step image reconstruction) and Noise2Inverse, indicating that it fills a gap in existing self-supervised image reconstruction with correlated noise.

\section{Theory}

Throughout this paper, we work in a Bayesian context where $X$, $Y$, and $\Xi$  are random variables with realizations $x$, $y$, and $\noisea$  subject to  \eqref{eq:ip}. Our goal is to define a reconstruction function $B \colon \mathbb{R}^m \to \mathbb{R}^n$ that recovers $x$ from $y$, evaluating the reconstruction based on the $\ell^2$ norm. In this context, the ideal reconstruction function minimizes the supervised risk $\ew [\| B(Y) - X \|^2 ]$. 

The challenge is that the distributions of $X$ and $Y$ are unknown; instead, only samples of $
(X, Y)$ in the supervised case and samples of $Y$ in the self-supervised case are available. The key idea of self-supervised image reconstruction is to find and minimize a surrogate risk that does not depend on $X$, while leveraging additional knowledge about $A$, the optimal reconstruction function $B$, and the underlying statistical noise model.

\subsection{Prior work}
\label{sec:back}

Before presenting the proposed Noisier2Inverse in Subsection~\ref{sec:npi}, we first describe the most relevant works that provide the background for Noisier2Inverse. In fact, Noisier2Inverse can be seen as an extrapolation-free, one-step extension of Noisier2Noise for image reconstruction.

\subsubsection{Self-supervised image denoising}

Image denoising aims to solve \eqref{eq:ip} when $A$ is the identity matrix. In this case, we aim for a denoising function, $f \colon \mathbb{R}^n \to \mathbb{R}^n$ mapping noisy images $y \in \R^n$ to clean images $x  \in \R^n$.  

\paragraph{Noise2Self:}  
Consider a partition $\J$ of the pixel set $\{ 1, 2, \dots, n\}$ and write  $x_I = (x_i)_{i\in I}$ and $x_I^c = (x_i)_{i \not\in I}$. The basic idea of Noise2Self is to restrict the class of denoisers to $\J$-invariant functions $f \colon \R^n \to \R^n$ defined by the property that $(f(x))_I$ depends only on $x_I^c$ for any $x \in \R^n$ and any $I \in \J$. The seminal work \cite{batson2019noise2self} shows that $\ew [\| f(Y) - X \|^2 ] = \ew [\| f(Y) - Y \|^2 ] + \ew \| Y - X \|^2$ for any $\J$-invariant function $f$, provided that the noise $\noisea$ has zero mean and that $\Xi_I$ is independent of $\Xi_I^c$ for any $I \in \J$. In particular, $\ew \| f(Y) - X \|^2$ is minimized by the same $\J$-invariant function as the self-supervised risk $\ew \| f(Y) - Y \|^2$. The latter is nearly minimized by minimizing $\sum_{t =1}^T \| y_t - f(y_t) \|^2$ for the noisy samples $\{ y_t \colon t =1, \dots, T \}$. In \cite{batson2019noise2self}, it has further been proposed to construct $\J$-invariant functions as $(f(x))_I := (g(a_I + x_I^c))_I$, where $g \colon \R^n \to \R^n$ is taken as a standard network.

\paragraph{Noisier2Noise:}  
Unlike Noise2Self, Noisier2Noise \cite{moran2020noisier2noise} does not require the noise to be independent among certain subgroups of pixels in the image.  
The main idea is to further perturb $Y$ by adding noise from the same distribution, resulting in noisier data $Z$. The authors show that $\ew [X | Z] = 2 \ew [Y | Z] - Z$, which suggests training a denoiser that minimizes the surrogate loss $\ew [\| f(Z) - Y \|^2 ]$ and using $f(Z)$ followed by an extrapolation step of adding $f(Z) - Z$ for inference. In \cite{moran2020noisier2noise}, it is demonstrated that Noisier2Noise  performs comparably to learned methods requiring richer training data while also outperforming traditional non-learned denoising methods.

\subsubsection{Self-supervised image reconstruction}

Now we consider the general image reconstruction task \eqref{eq:ip} beyond denoising where $A$ is a linear, potentially ill-conditioned forward map.

\paragraph{Two-step approaches:}   \label{sec:twostep}
A natural strategy for extending self-supervised denoising to image reconstruction is to use a two-step approach, either in the preprocessing mode $B^Y = B^\sharp \circ f^Y$ or the post-processing mode $B^X = f^X \circ B^\sharp$, where the networks $f^X$ and $f^Y$ are trained as self-supervised denoisers using methods like Noise2Self or Noisier2Noise. Here, $B^\sharp$ denotes a non-learned initial reconstruction map. When combined with Noise2Self, the post-processing approach increases the correlation of the noise, while for Noisier2Noise, the extrapolation step tends to become unstable. In the context of subsampled MRI image reconstruction, SSDU \cite{yaman2020self} and its extensions~\cite{millard2023theoretical,millard2024clean,blumenthal2024self} successfully implement multiplicative Noisier2Noise in preprocessing mode, where $A$ is taken as the Fourier transform and subsampling is interpreted as multiplicative noise. Because the Fourier transform is an isometry, Noisier2Noise in the Fourier domain can be stably transferred to the image domain. However, this stability does not hold for general inverse problems, such as the Radon transform. Due to the absence of ground-truth images, addressing issues related to the null space and ill-posedness remains challenging.

%A natural strategy to extend self-supervised denoising to image reconstruction, is to use a two-step approach either in the preprocessing mode $B^Y =   B^\sharp  \circ   f^Y $ or the post-processing mode $B^X =  f^X  \circ B^\sharp$, where the networks $f^X$, $f^Y$ are trained as self-supervised denoisers (using Noise2Self or Noisier2Noise) and $B^\sharp$ is a non-learned initial reconstruction map. Due to the absence of ground-truth images, for both models it is challenging to address the null-space component. Moreover, when combined with Noise2Self, the post-processing approach increases the correlation of the noise, whereas for Noisier2Noise, the extrapolation step becomes unstable.

%Inspired by  \cite{moran2020noisier2noise} we have adapted this strategy to inverse problems and observed that the twostep approach works worse as our proposed approach, Noisier2Inverse. This can be heuristically justified as follows: In most real-world scenarios, fully sampled data is unavailable. In the twostep approach, after obtaining the initial reconstruction, a extrapolation is applied by subtracting the reconstruction of noisier data from the network output. However, this reconstruction introduces additional artifacts and noise into what would otherwise be an artifact-free network output. As a result, for inverse problems like Computed Tomography, the proposed one-step approach is more suitable and leads to improved reconstruction performance. 

\paragraph{Noise2Inverse and Sparse2Inverse:}  
Noise2Inverse~\cite{hendriksen2020noise2inverse} extends Noise2Self to image reconstruction in a one-step manner and is currently one of the best-performing self-supervised methods for CT reconstruction. Such an approach was proposed for Cryo-Electron Tomography in~\cite{buchholz2019cryo}, and a theoretical analysis in a general framework was presented in~\cite{hendriksen2020noise2inverse}. In this approach, a network of the form \( B^X = f^X \circ B^\sharp \) is trained in the reconstruction domain. However, $\J$-invariance is used in the  projection domain, where the noise is assumed to be statistically independent. While Noise2Inverse effectively reduces measurement noise, it is neither designed to address artifacts caused by incomplete data nor to handle correlated noise. This limitation is addressed in~\cite{gruber2024sparse2inverse} where the proposed Sparse2Inverse uses an architecture and data partitioning similar to Noise2Inverse but with a loss in the data domain. This allows the network to learn components of the null space, which has been demonstrated to successfully remove missing data ghosts for sparse-angle CT. A similar strategy was proposed in \cite{unal2024proj2proj} for low-dose CT.

\subsection{Noisier2Inverse}
\label{sec:npi}

In this section, we introduce Noisier2Inverse, a novel self-supervised framework for solving \eqref{eq:ip}, allowing general $A$ and  spatially correlated noise $\noisea$. The noise can be any type of additive noise with a known generation process.

\subsubsection{Theory}
\label{sec:theory}

As before,  $X$, $Y$, and $\Xi$ are random variables with realizations $x$, $y$, and $\noisea$, subject to model \eqref{eq:ip}. Further, let $N$ be another additive noise variable and consider the noisier data $Z = Y + N$. Similar to the Noisier2Noise framework, we aim for a loss function based on samples of $(Y,Z)$ instead of samples of $(X,Y)$. However, in contrast to Noisier2Noise,  our method avoids any extrapolation step, which would negatively affect  reconstruction quality due to the ill-conditioning of $A$.  

Our main theoretical result is as follows.

\begin{theorem}[Expected Prediction Error] \label{thm:theory}
Let  $(X,Y,Z)$  be as above. Then, for   any $W \in \R^{q \times m}$, we have    
\begin{equation}\label{eq:noisier2inv}
\begin{aligned}
& \argmin_f \ew \norm{WA [ f \circ B^\sharp (Z) ] -  WA [ X ] }_2^2 = \argmin_f \ew \|WA [ f \circ B^\sharp (Z) ]- W(2Y - Z) \|_2^2, 
\end{aligned}
\end{equation}
where $B^\sharp \colon \R^m \to \R^n$ is fixed and the  minimum is taken  over all measurable functions $f \colon \R^n \to \R^n$.
\end{theorem}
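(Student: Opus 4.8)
The plan is to reduce \eqref{eq:noisier2inv} to the orthogonality (bias--variance) decomposition of the $L^2$-projection onto $\sigma(Z)$, combined with the identity $\ew[\Xi \mid Z] = \ew[N \mid Z]$ coming from the noise model. First, I would note that as $f$ ranges over all measurable maps $\R^n \to \R^n$, the random vector $g_f \triangleq WA[f \circ B^\sharp(Z)]$ ranges over a fixed family of $\sigma(Z)$-measurable, $\R^q$-valued, square-integrable random vectors. For any such $g$ and any square-integrable target $V$, the standard decomposition
\[
\ew\norm{g - V}_2^2 = \ew\norm{g - \ew[V \mid Z]}_2^2 + \ew\norm{V - \ew[V \mid Z]}_2^2
\]
holds, because the cross term $\ew\langle g - \ew[V\mid Z],\ \ew[V\mid Z] - V\rangle$ vanishes after conditioning on $Z$ (the first factor is $\sigma(Z)$-measurable and the conditional mean of the second factor is zero), and the second summand does not depend on $f$. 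Hence both objectives in \eqref{eq:noisier2inv} --- the left one with $V = WA[X]$, the right one with $V = W(2Y - Z)$ --- equal $\ew\norm{g_f - \ew[V\mid Z]}_2^2$ up to an $f$-independent constant, so it suffices to prove $\ew[WA[X] \mid Z] = \ew[W(2Y - Z) \mid Z]$.

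Next I would simplify the two targets using the model directly: from $Y = AX + \Xi$ and $Z = Y + N$ one gets $2Y - Z = AX + (\Xi - N)$, hence $W(2Y - Z) - WA[X] = W(\Xi - N)$. Thus the required equality of conditional means is equivalent to $W\,\ew[\Xi - N \mid Z] = 0$, and it is enough to establish the stronger, $W$-free statement $\ew[\Xi \mid Z] = \ew[N \mid Z]$.

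This last identity is the crux of the argument, and it is where the assumption on $N$ enters: $N$ is drawn, independently of $\Xi$ given $X$, from the same (possibly $X$-dependent) generation process as $\Xi$, so that conditionally on $X$ the pair $(\Xi, N)$ is exchangeable --- indeed i.i.d. Since $Z = AX + \Xi + N$ is a symmetric function of $(\Xi, N)$ given $X$, conditionally on $X$ the triple $(Z, \Xi, N)$ has the same law as $(Z, N, \Xi)$; consequently the conditional law of $\Xi$ given $(X, Z)$ equals that of $N$ given $(X, Z)$, and in particular $\ew[\Xi \mid X, Z] = \ew[N \mid X, Z]$. Applying $\ew[\,\cdot \mid Z]$ to both sides and using the tower property yields $\ew[\Xi \mid Z] = \ew[N \mid Z]$, which finishes the proof. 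The only genuinely delicate point is this exchangeability reduction; the rest is the routine Hilbert-space projection identity. I would also record the integrability hypotheses --- e.g.\ $\ew\norm{WA[X]}_2^2,\ \ew\norm{W\Xi}_2^2,\ \ew\norm{WN}_2^2 < \infty$ --- that make every expectation above finite and ensure the two argmin sets coincide exactly.
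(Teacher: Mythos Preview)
Your argument is correct and follows essentially the same route as the paper: both reduce the claim to the identity $\ew[\Xi\mid Z]=\ew[N\mid Z]$ and conclude that the two risks differ by an $f$-independent constant, with the only cosmetic difference being that you invoke the $L^2$-projection (bias--variance) decomposition explicitly whereas the paper expands the squares directly. Your exchangeability justification for $\ew[\Xi\mid Z]=\ew[N\mid Z]$ and the explicit integrability remarks are welcome additions that the paper's proof simply asserts or leaves implicit.
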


\begin{proof}
From the properties of the conditional expectation, it follows that
\begin{align*}
    &\ew \left[\|WA [ f \circ B^\sharp (Z) ] - W(2Y- Z)\|_2^2\right] 
     - \ew \left[\| WA [  f \circ B^\sharp (Z)  ] - WA [ X ] \|_2^2\right] 
    \\
   & \hspace{0.001\columnwidth} = \ew_Z \left[\ew \left[\|WA [ f \circ B^\sharp (Z) ] - W(2Y- Z) \|_2^2 \middle| Z\right] \right] 
     - \ew_Z \left[\ew \left[\|  WA [  f \circ B^\sharp (Z)  ] -  WA [X] \|_2^2 \middle| Z \right] \right] 
    \\
    & \hspace{0.001\columnwidth} = 2 \, \ew_Z \left[\ew 
    \left[(WA [  f \circ B^\sharp (Z)  ])^\top (W A X) \middle| Z \right]\right] 
     - 2 \, \ew_Z \left[\ew 
     \left[WA [  f \circ B^\sharp (Z)  ]^\top W(2Y - Z) \middle| Z \right]\right]  
     - \ew \left[\|WA X\|_2^2 \right] 
     +  \ew \left[\|W(2Y - Z)\|_2^2 \right] 
    \\
    & \hspace{0.001\columnwidth}
    = \ew_Z \left[2(WA [ f \circ B^\sharp (Z)])^\top W \ew \left[AX - (2Y - Z) \middle| Z\right]\right] 
     - \ew \left[\|WAX\|_2^2 \right] 
    + \ew \left[\|W(2Y - Z)\|_2^2 \right],
\end{align*}
where the first equality used the  law of total expectation.
Further, from $\ew[\Xi|Z] = \ew[N|Z]$, we get 
\begin{align*}
2 \, \ew[Y|Z] &= \ew[AX|Z] + \ew[AX|Z] + \ew[\Xi|Z] + \ew[N|Z]\\
&= \ew[AX|Z] + \ew[AX + \Xi + N|Z]\\
&= \ew[AX|Z] + \ew[Z|Z] \,.
\end{align*}
Thus,  $\ew[(AX)-(2Y - Z)|Z] = 0$, and  
\begin{align*}
& 
\ew \left[\|WA [ f \circ B^\sharp (Z) ] - W(2Y- Z)\|_2^2\right]
= \ew \left[\| WA [  f \circ B^\sharp (Z)  ] - WA [ X ] \|_2^2\right] 
- \ew \left[\|WAX\|_2^2 \right]  + \ew[\|W(2Y - Z)\|_2^2]
\end{align*}
which yields \eqref{eq:noisier2inv}.
\end{proof}

Based on Theorem~\ref{thm:theory}, we train $f_\theta \circ B^\sharp$ by nearly minimizing $\mathbb{E} \left[ \|W A [(f_\theta \circ B^\sharp)(Z)] - W (2Y - Z)\|_2^2 \right]$ and use $(f_\theta \circ B^\sharp)(Z)$ for inference. In contrast to Noisier2Noise, the trained network directly yields $X$ without requiring any extrapolation step, which we observed to result in significant improvements for problems such as CT reconstruction.

\subsubsection{Practical realization}
\label{sec:realization}

Following the self-supervised learning paradigm, we assume a collection of noisy measurement data $y_t = A x_t + \noisea_t \in \mathbb{R}^m$ for $t = 1, \dots, T$ with $y_t$, and $\xi_t$ being realizations of $X$, and $\Xi$ and without access to the clean ground truth realizations $x_t \in \mathbb{R}^n$ of $X$. Following Section~\ref{sec:theory}, in each training iteration we generate noisier data $z_t = y_t + \noiseb_t$ where $\noiseb_t$ are realizations of $N$ and approximately minimize the loss function  
\begin{equation}\label{eq:lossW}
\loss_{W}(\theta)  
= \sum_{t=1}^T 
\| W [ A(f_\theta \circ B^\sharp) (z_t) ] - W(2 y_t - z_t)\|_2^2 \,,
\end{equation}  
where $B^\sharp \colon \mathbb{R}^m \to \mathbb{R}^n$ is an initial reconstruction map, and $f_\theta \colon \mathbb{R}^n \to \mathbb{R}^n$ is a neural network. During inference, we define reconstructions by either of the following:  
\begin{align}
x_W^{(z)} &= (f_\theta \circ B^\sharp) ( z ) \label{eq:ni1}\\ 
x_W^{(y)} &= (f_\theta \circ B^\sharp) ( y ) \label{eq:ni2}
\end{align}  
where $z = y + \noisea$. While variant~\eqref{eq:ni1} is suggested by the theory, variant~\eqref{eq:ni2} yields better numerical results. Similar observations have been reported in \cite{millard2023theoretical, yaman2020self}.
  
\begin{remark}[Regularization via early stopping] \label{rem:stopping}
To ensure proper regularization, we do not strictly minimize the loss \eqref{eq:lossW}. Instead, we propose using standard gradient methods to minimize \eqref{eq:lossW} while incorporating early stopping in the iterative process. Our experiments revealed that training the networks for too long leads to a continued decrease in the loss but introduces unwanted noise into the network's output.

Determining an appropriate stopping criterion remains a significant challenge. To address this, we employed a very small learning rate and conducted extensive experiments, training the network for over 10,000 epochs. In our evaluation, we compared results obtained after training for a fixed number of epochs with those based on an optimal selection using PSNR (which, in the self-supervised setting, is not directly applicable due to the absence of clean data). Interestingly, we observed that the results did not differ significantly between these approaches.

\end{remark}

\begin{remark}[Sobolev  loss] 
The choice $W= \id$ results in the standard MSE-loss in measurement space. Due to the smoothing properties of the forward map $A$,  this loss might be too weak for accurate quantification of reconstruction quality; in particular, it does not strongly penalize noisy results. Thus, stronger norms for the loss seem reasonable. In particular, for the numerical results below for CT inversion we will also use   $W = \nabla$ as first order discrete derivative, in which case we refer to \eqref{eq:lossW} as the Sobolev loss.  
\end{remark}
 
\begin{remark}[One-step Noisier2Noise]   
Following \cite{moran2020noisier2noise} \label{rem:n2n-onestep} and similar to Theorem \ref{thm:theory} one derives     
\begin{equation}\label{eq:noisier2noise1}
\argmin_f \ew \norm{A [ f \circ B^\sharp (Z) ] -  A [ X ] }_2^2=
2 \argmin_f \ew \norm{A [ f \circ B^\sharp (Z) ]- Y }_2^2 - B^\sharp (Z)   \,.
\end{equation}
This suggests training a network $f_\theta \circ B^\sharp \colon \R^m \to \R^n$ by nearly minimizing $\sum_{t=1}^T \| A [ (f_\theta \circ B^\sharp )(z_t)] - y_t \|_2^2$ (again, including early stopping) and performing inference by  
\begin{align}
x^{(1)} &=  2 \, (f_\theta - \id ) ( B^\sharp( z ) ) \label{eq:nn1}\\ 
x^{(2)} &=  f_\theta ( B^\sharp ( y ) ) \,, \label{eq:nn2}
\end{align}  
where, as before, $B^\sharp$ is an initial  reconstruction, $f_\theta$ is an image space network, and $y$ and $z$ are the noisy and the noisier data.  Similarly to the proposed Noisier2Inverse, this can be seen as a one-step approach in the sense that initial image formation is not separated from self-supervised training. In our experiments, we will always refer to this as Noiser2Noise. Note that Formulas~\eqref{eq:noisier2noise1} and~\eqref{eq:nn1} reduce to the standard Noisier2Noise framework when  $A=\id$. However, to the best of our knowledge, no generalization for an arbitrary $A$ has been proposed in the literature.

Our experiments consistently demonstrate that Noisier2Inverse outperforms Noisier2Noise. We attribute this to imperfections in the reconstruction step, which introduce errors that propagate and accumulate during the subsequent extrapolation phase.
\end{remark}

\section{Numerical Experiments}

For the experimental results presented, we consider CT image reconstruction in a 2D parallel beam geometry, where 
$A$ represents the Radon transform with equidistant projection angles distributed along a semicircle. We consider both a full data situation and an incomplete data situation using only a small number of directions. The code together with the corresponding datasets can be found on GitHub (\url{https://github.com/Nadja1611/Noisier2Inverse-Joint-Denoising-and-Reconstruction-of-correlated-noise}).

\begin{figure*}[htb!]
    \centering
    \includegraphics[width=0.99\linewidth]{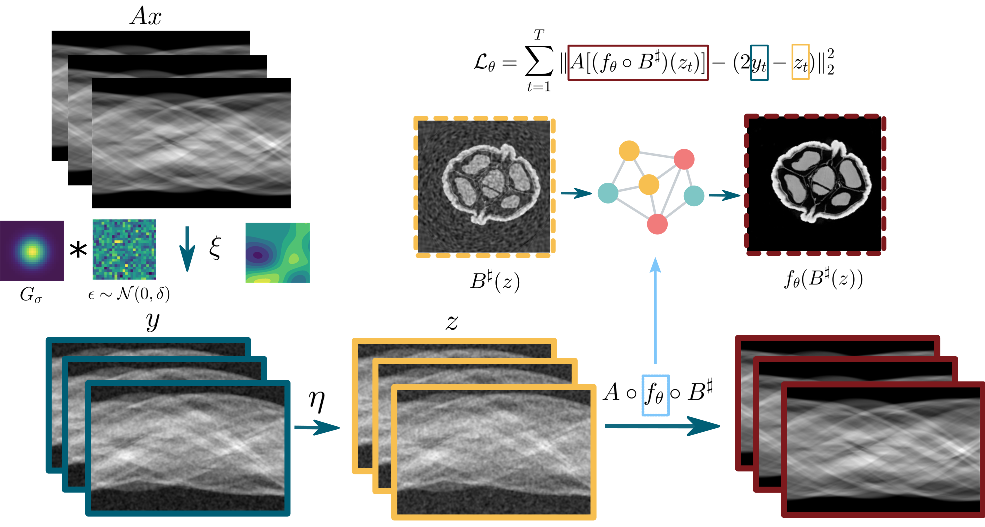}
    \caption{Noisier2Inverse for CT reconstruction: We first create noisy sinograms $y$ by adding correlated noise $\noisea$ (white noise convolved with a Gaussian), where $\cl$ is the correlation parameter of the noise and $\delta$ its standard deviation. At each training iteration, we then add additional noise to generate $z = y + \noiseb$  The initial reconstruction  $B^\sharp(z)$ serves as the network input, and subsequently, the loss is computed and minimized in the data domain by applying the forward operator $A$ to the network output.}
    \label{fig:workflow}
\end{figure*}

The overall workflow of  Noisier2Inverse for  CT reconstruction using data generation and image reconstruction  can be seen in Figure~\ref{fig:workflow}.

\subsection{Experimental details}

\subsubsection{Dataset generation}
We generate two datasets with correlated noise. The first one consists of 32 images from the CBCT Walnut dataset (Cone-Beam CT Data Set Designed for Machine Learning). The second dataset is a heart CT dataset consisting of 107 images from the Patient-Contributed Image Repository, taken from \url{https://www.kaggle.com/datasets/abbymorgan/heart-ct}. We split both datasets into training, validation, and test data and resized all images to a size of $336 \times 336$. We then construct the full projection data $A x$ using 512 projection angles. For $A$ and the initial reconstruction map $B^\sharp$ we use the Radon  transform and the FBP (filtered backprojection) of tomosipo~\cite{hendriksen-2021-tomos} and leveraged LION (\url{https://github.com/CambridgeCIA/LION/tree/main}) to specify the CT geometry.  

The clean data $A x$ are artificially distorted by adding noise twice, resulting in $y = A  x + \noisea$ and $z = y +  \noiseb$. 
Here, $\noisea$ and $\noiseb$ follow the same correlated nosie distribution  $G_\cl \ast \mathcal{N}(0,\std)$, where $G_\cl$ is a convolutional kernel with bandwidth $\cl$. We set $\std = 5.0$ for the Walnut dataset and $\std = 1.0$ for the heart CT dataset, with $\cl = 2.0, 3.0, 5.0$ in both cases.

\subsubsection{Tested methods}
{
The results use four variants of Noisier2Inverse, two variants of Noisier2Noise and Noise2Inverse, resulting in a total of seven comparison methods. In all Tables (see supplementary) and Figures we use the following notation:  
\setlength{\parskip}{0pt}
\begin{itemize}
\setlength{\itemsep}{0pt}

\item \nz{} for Noisier2Inverse with $W=\id$ on $z$.

\item \ny{} for Noisier2Inverse with $W=\id$ on $y$.

\item \nzs{} for Noisier2Inverse with $W=\nabla $ on $z$ 

\item \nys{} for Noisier2Inverse with $W=\nabla $ on $y$ 

\item \nzt{}  for Noisier2Noise evaluated on $z$  

\item \nyt{}  for Noisier2Noise evaluated on $y$  

\item \nii{} for Noise2Inverse.
\end{itemize}
}

\subsubsection{Evaluation metrics}

To analyze the training behavior, we validated the results and computed the peak signal-to-noise ratio (PSNR) and structural similarity index metric (SSIM) between the reconstructions and the clean images. We observed that, with an appropriately chosen learning rate, the method demonstrated excellent stability. For a sufficient number of training epochs, the results were robust and consistent.

\subsubsection{Network training}

We employ the same U-Net configuration as used in \cite{hendriksen2020noise2inverse} for all methods, set the learning rate to $5 \times 10^{-5}$ and utilize  a batch size of 4.  Notably, the results tend to degrade as the number of epochs increases. We hypothesize that this phenomenon is due to the correlation of noise present in the data. To mitigate this issue, we use PSNR validation on ground truth data as an oracle criterion. During training, the network was trained for 9000 epochs, and the weights from the final epoch were used for testing. These results were then evaluated against the PSNR-based oracle.

\subsection{Experimental results}

In the following experimental results we demonstrate  the  superiority of Noisier2Inverse over Noisier2Noise and Noise2Inverse in the presence of correlated noise. Moreover, we emphasize the benefits of using the Sobolev loss ($W= \nabla$) instead of the standard MSE-loss ($W= \id$) in certain cases, evaluate stopping criteria  and  demonstrate the robustness of Noisier2Inverse against variations in the noise model.

\begin{figure*}[htb!]
    \centering
\includegraphics[width=0.995\linewidth]{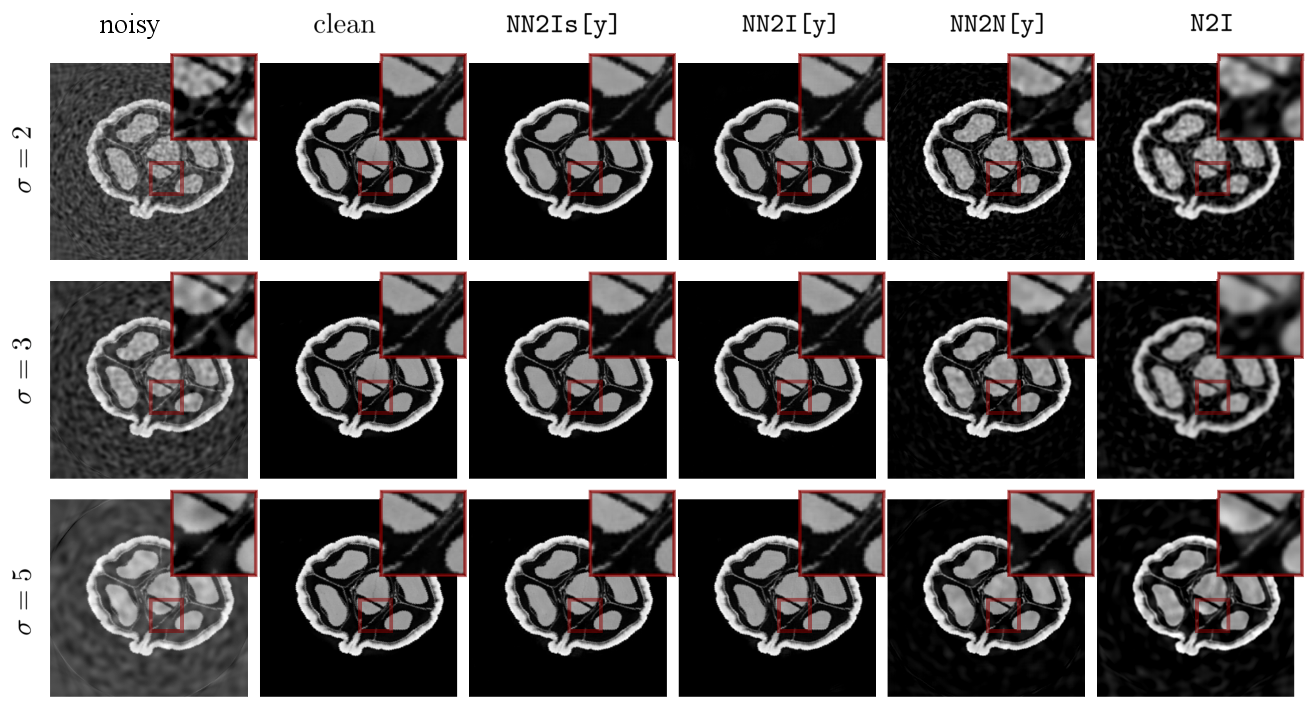} \\[1em]
\includegraphics[width=0.995\linewidth]{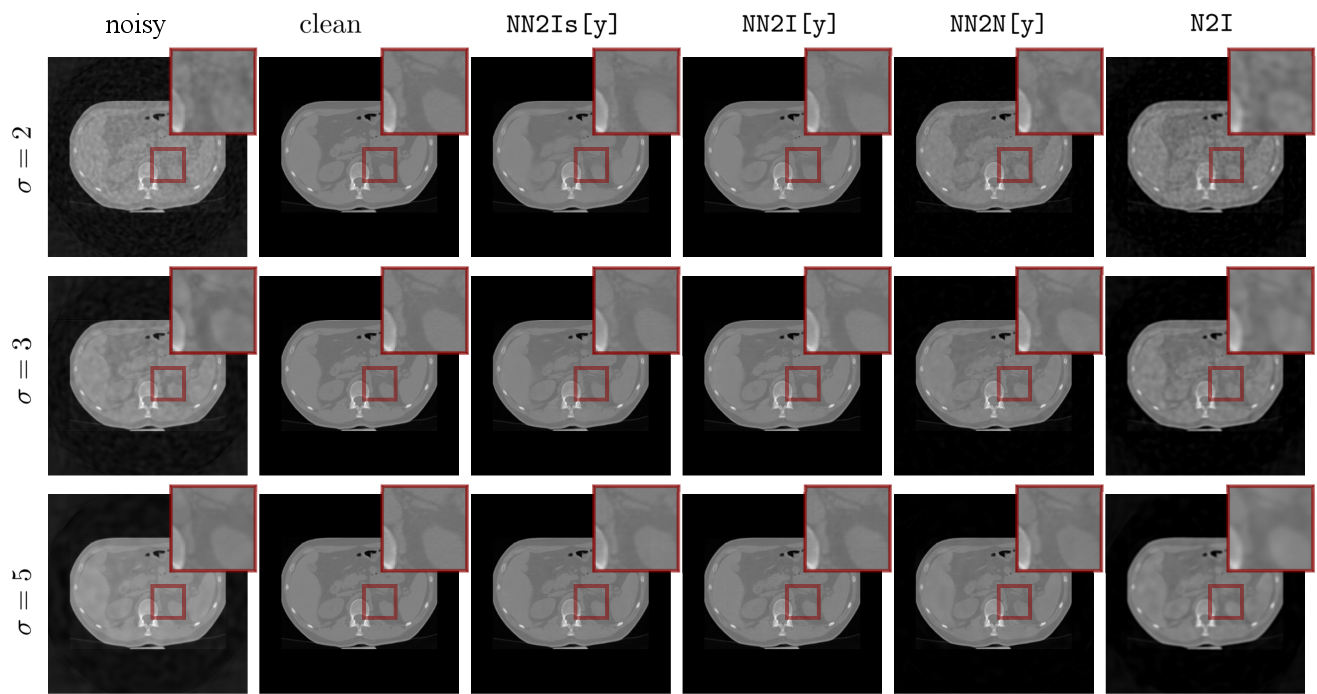}
    \caption{\textbf{Reconstruction results performing inference on the noisier data $y$} for the Walnut testing set (top) and the heart CT testing set (bottom).}
    \label{fig:results}
\end{figure*}

\subsubsection{Reconstruction results}

Figure~\ref{fig:results} shows the reconstruction results using different methods for the Walnut testing set (top) and the Heart CT testing set (bottom). We only display the variants where Noisier2Inverse and Noisier2Noise are applied to $y$ for inference; see the supplementary material for the variants with $z$. The results indicate a clear outperformance of Noisier2Inverse over Noisier2Noise and Noise2Inverse. Moreover, Noisier2Inverse with Sobolev loss (\nys) outperforms Noisier2Inverse with MSE-loss (\ny) in regions expected to remain constant. Conversely, \ny{} excels in preserving fine details. Thus, the choice between these methods should depend on the particular scenario.

\begin{figure*}[htb!]
    \centering
\includegraphics[width=0.99\linewidth]{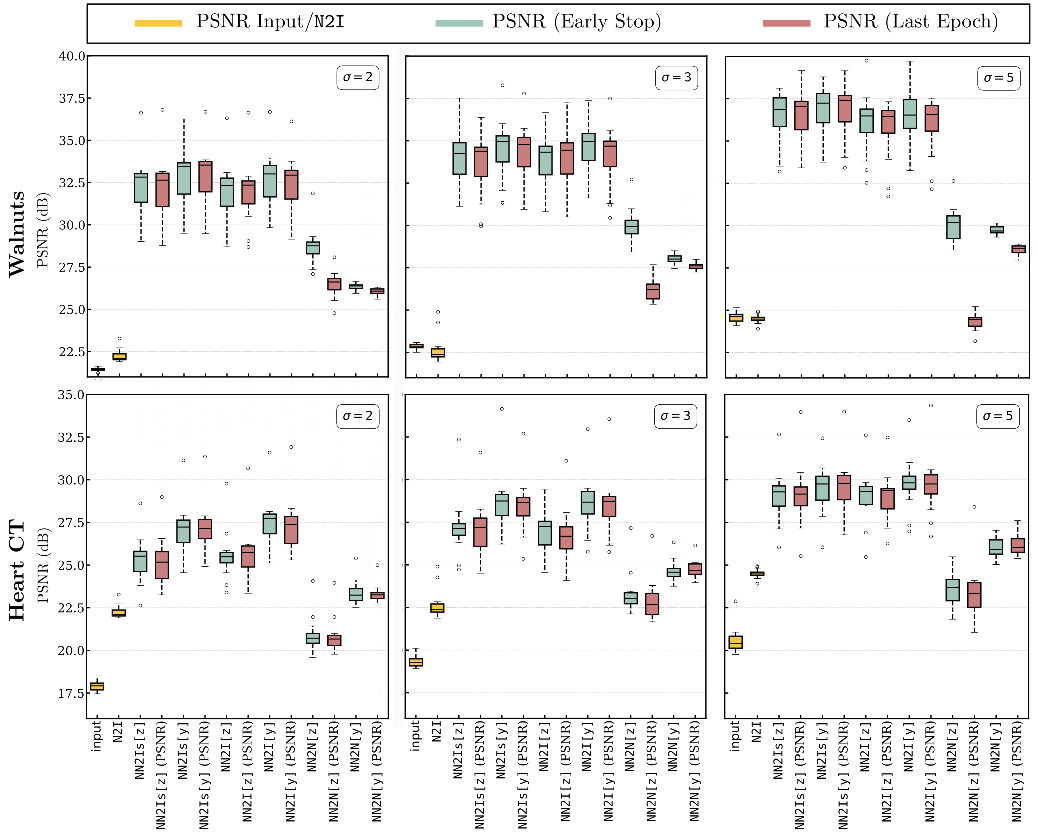}
    \caption{PSNR values for the different methods and for two different stopping criteria evaluated for correlation parameter $\cl$ of the noise.}
    \label{fig:boxplots} 
\end{figure*}

Figure~\ref{fig:boxplots} shows boxplots for the PSNR values for all seven different methods. The comparison includes performance when the model from the last training epoch is used for testing and when the best-performing model is selected based on the PSNR-based oracle criterion. We observe that the performance difference between unsupervised stopping and the oracle is minimal. For Noisier2Noise, this gap becomes slightly  more pronounced, highlighting another advantage of Noisier2Inverse. 
\begin{figure*}[htb!]
    \centering
\includegraphics[width=0.8\linewidth]{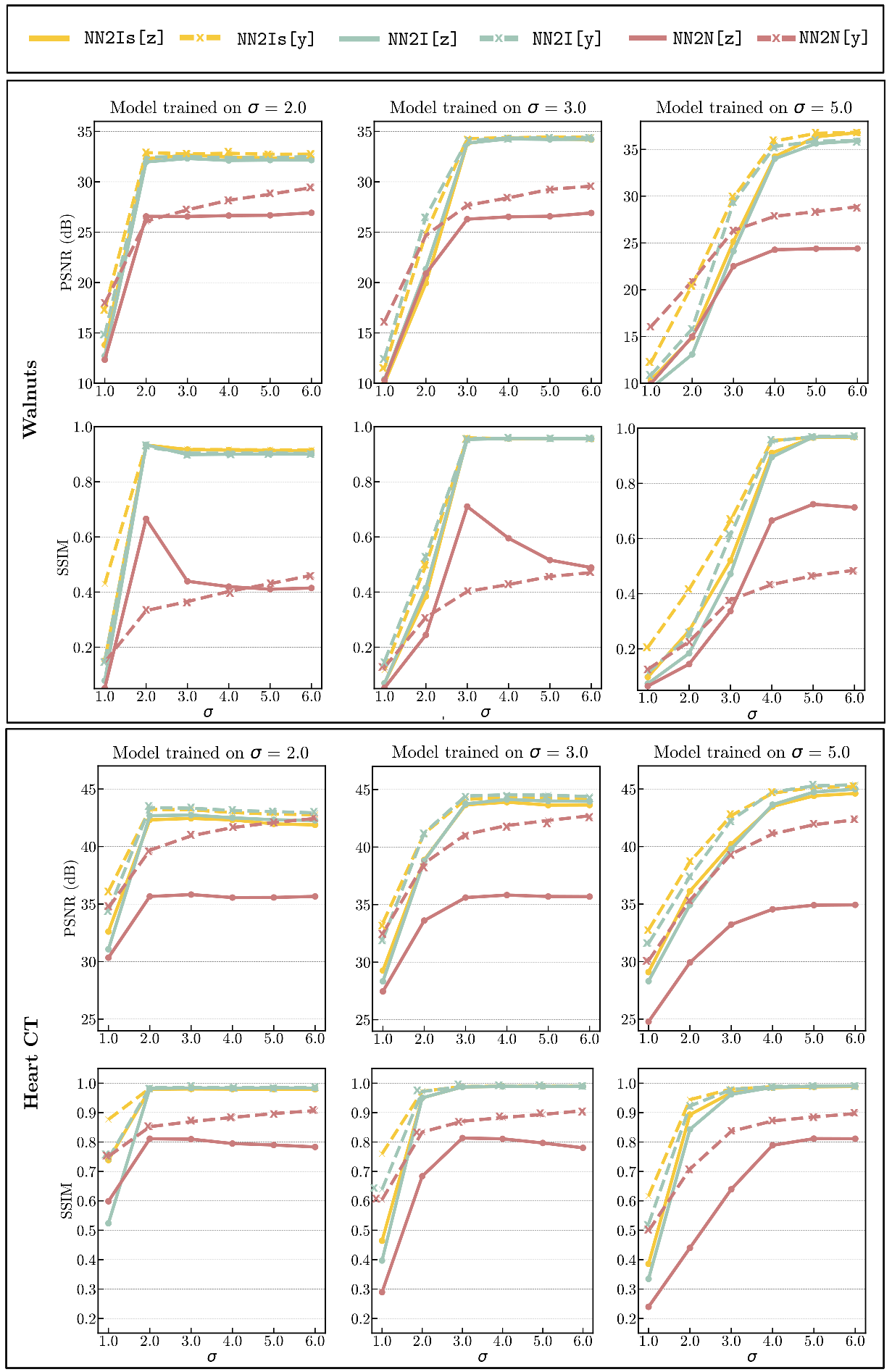}
\caption{\textbf{PSNR and SSIM for different correlation   parameters $\cl$ for the Walnut dataset (top) and the Heart CT dataset (bottom).} We display the mean metrics for 5 different correlation parameter $\cl$, while the network was trained on only one of them (2.0, 3.0, or 5.0). }
    \label{fig:sigmas}
\end{figure*}

\subsubsection{Robustness regarding the noise model}
Since the exact noise model may not always be known in practice, we assess the robustness of our trained models by evaluating their performance on data with varying correlation parameters. Specifically, we analyze how networks trained on data with a noise level corresponding to a kernel size
$\cl$ perform when applied to data with different values of $\cl$. The results, expressed in terms of PSNR and SSIM, are presented in Figure~\ref{fig:sigmas}. The graphs indicate the average performance metrics across all test images for six different kernel sizes $\cl$. The corresponding networks were trained on data degraded by correlated noise with kernel sizes of 2.0, 3.0, and 5.0, respectively.

In most cases, the highest performance is achieved when the model is tested on data with the same noise level it was trained on. 

A notable positive finding is that the Noisier2Inverse approaches maintain relatively stable performance even when applied to data with higher noise parameters $\cl$. In contrast, two-step methods exhibit a decline in performance under these conditions, making them less suitable for practical applications.

\clearpage

\subsubsection{Sparse data results}
Another key advantage of Noisier2Inverse is that the loss is computed directly in the data domain, enabling the removal of sparse data artifacts. To illustrate this, we performed experiments on sparse data with only 64 available projection angles. The results, presented in Figure~\ref{sparse}, clearly show that Noisier2Inverse outperforms the two-step approaches.

In the visualizations of the input images, dominant artifacts and correlated noise are evident. Both the numerical results and the boxplots confirm that the one-step approach achieves superior performance compared to the two-step method.

\begin{figure*}[htb!]
    \centering
    \includegraphics[width=\linewidth]{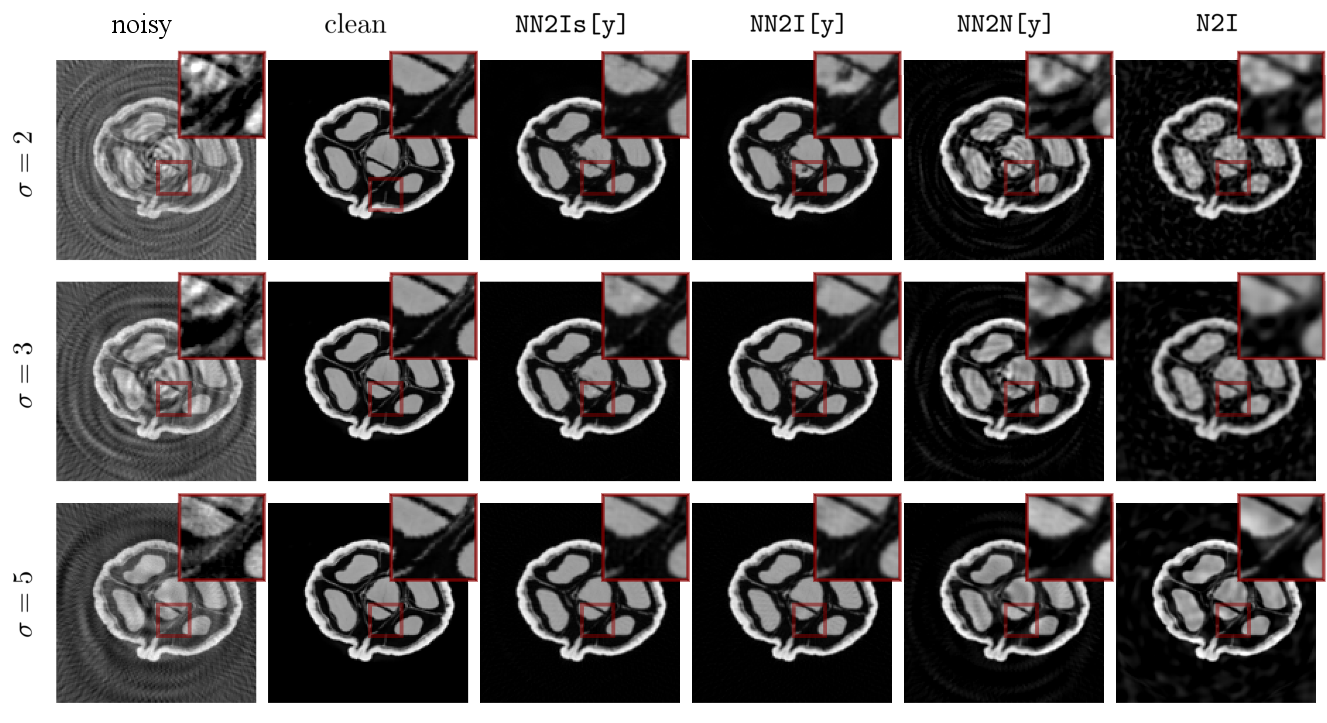}\\[1em]
    \includegraphics[width=\linewidth]{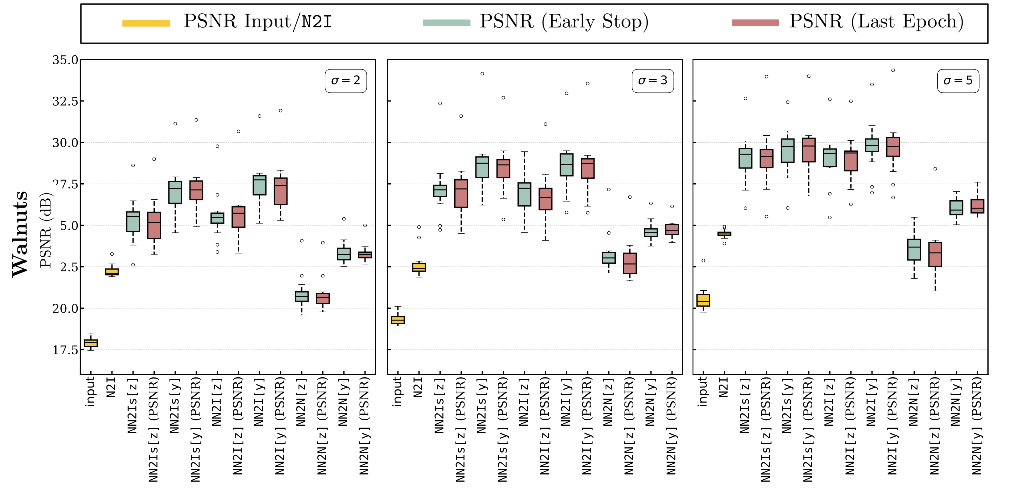}
    \caption{\textbf{Sparse data results.} Top: Results obtained on sparse projection data consisting of 64 angular directions using the $y-$prediction method. We can clearly observe the benefit of the one-step approaches compared to the two-step method. Bottom: PSNR and SSIM for the different methods obtained on sparse data examples (number of projection consisting of measurements from 64 different angular directions.}
    \label{sparse}
\end{figure*}

\section{Discussion}

In this work, we introduce a novel self-supervised method for the joint reconstruction and denoising of inverse problems, specifically designed to handle cases where the noise is correlated. This is useful in real-world imaging scenarios, where noise is often structured and non-Gaussian, such as in medical imaging modalities like MRI and CT, as well as in other applications where conventional methods struggle to perform effectively under these conditions. This assumption is consistent with many real-world scenarios. However, to the best of our knowledge, no existing method in the current literature explicitly addresses this challenge. We extend Noisier2Noise as a one-step approach \cite{moran2020noisier2noise} to inverse problems and further introduce Noisier2Inverse as an extrapolation-free alternative to Noisier2Noise that outperforms alternative reconstruction methods.

Consistent with observations in \cite{moran2020noisier2noise}, we find that Noisier2Inverse and Noisier2Noise using inference directly on noisy given data $y$ yield slightly better results than predictions made on the noisier data $z$. As a natural next step, it would be worthwhile to develop an adaptation of Noisier2Inverse, incorporating an additional parameter
to control the proportion of added noise. This adjustment could enable finer control over the denoising process. In particular, \cite{moran2020noisier2noise} reported improved performance when the intensity of the added noise was lower than that of the noise present in the original data, a finding that merits further investigation within this extended framework.

For future work, we plan to conduct more realistic experiments, where we simulate correlated noise, such as when a CT detector is defective or pieces of metal are in the human body. Moreover, applying Noisier2Inverse to other projection-based imaging tasks, such as Cryo-Electron Tomography, could be an interesting direction for future research. However, this falls outside the scope of this paper, which primarily focuses on establishing a solid methodological foundation.

\section{Conclusion}

In this work, we introduce Noisier2Inverse, a novel self-supervised image reconstruction approach for data corrupted by correlated noise. To achieve this, we have developed a new training strategy that performs well even with small datasets. Our results highlight that, despite assuming a known noise distribution and fixed noise parameters, Noisier2Inverse demonstrates robustness in various scenarios. Additionally, we show that Noisier2Inverse effectively addresses key limitations of existing state-of-the-art self-supervised methods, offering enhanced performance and stability. It is part of future work to further test and adapt the approach to more realistic scenarios, where the correlated noise is naturally apparent in the provided measurement data.

\section*{Acknowledgement}
G. Hwang was supported by the National Research Foundation of Korea(NRF) grant funded by the Korea government(MSIT) (RS-2024-00333393).

\bibliographystyle{IEEEtran}
\bibliography{Noiser2Inverse_Arxiv.bbl}

\clearpage

\begin{appendix}

\section{Supplementary Figures}
\label{sec:supplementary}

\begin{figure*}[htb!]
\centering
\includegraphics[width=\linewidth]{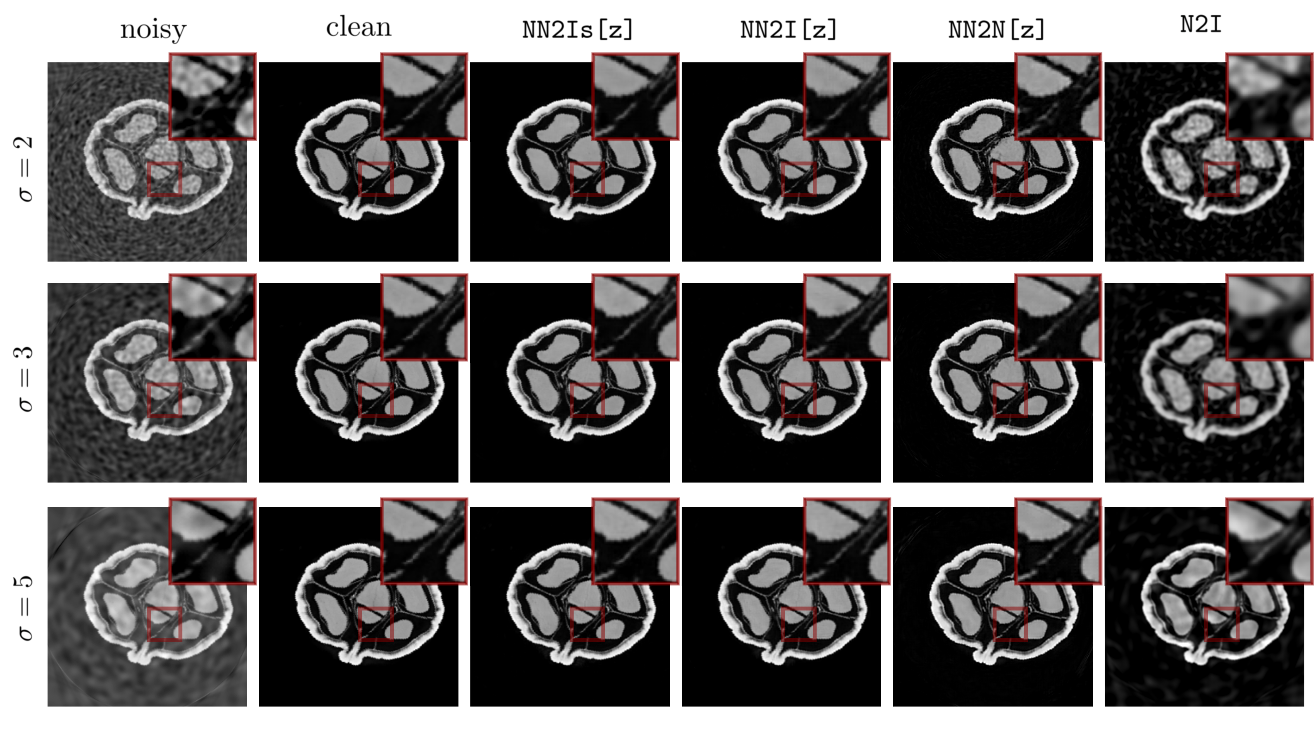}
\\[1em]
\includegraphics[width=\linewidth]{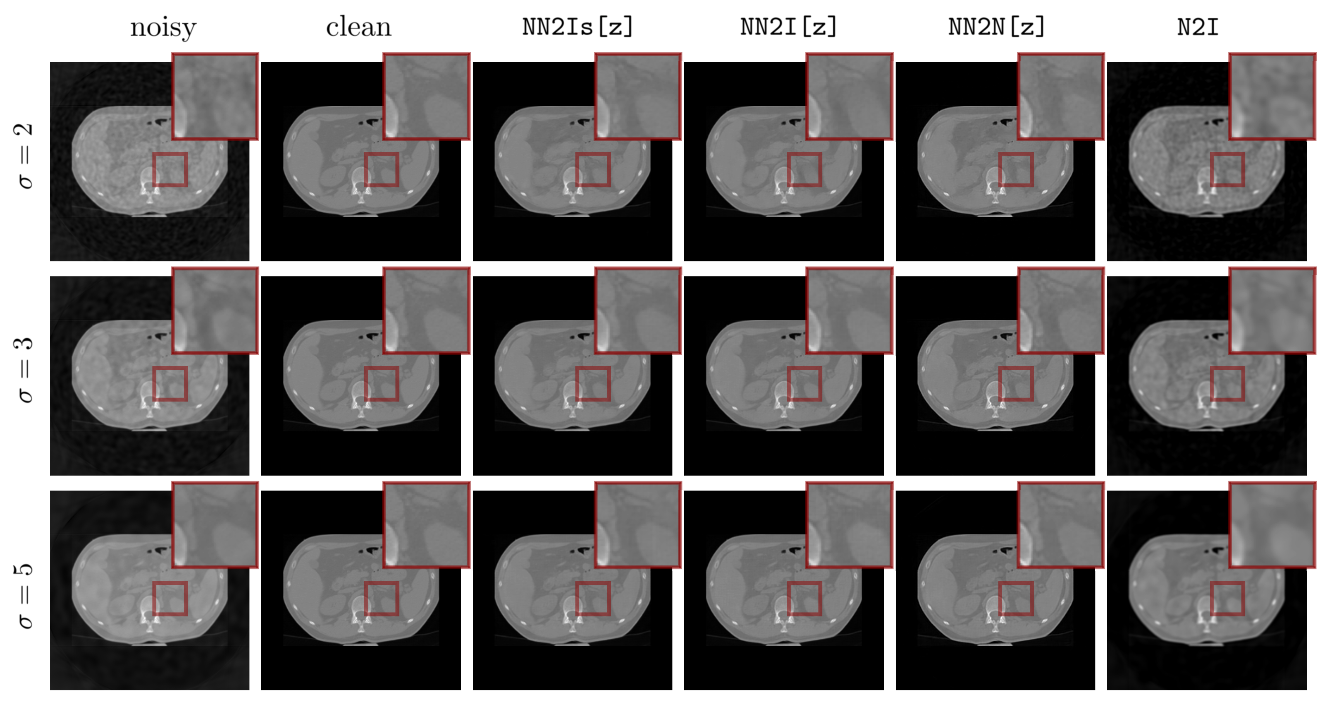}
    \caption{\textbf{Reconstruction results performing inference on the noisier data $z$} for the Walnut testing set (top) and the heart CT testing set (bottom).}
    \label{fig:z}
\end{figure*}

\begin{figure*}[htb!]
    \centering
\includegraphics[width=\linewidth]{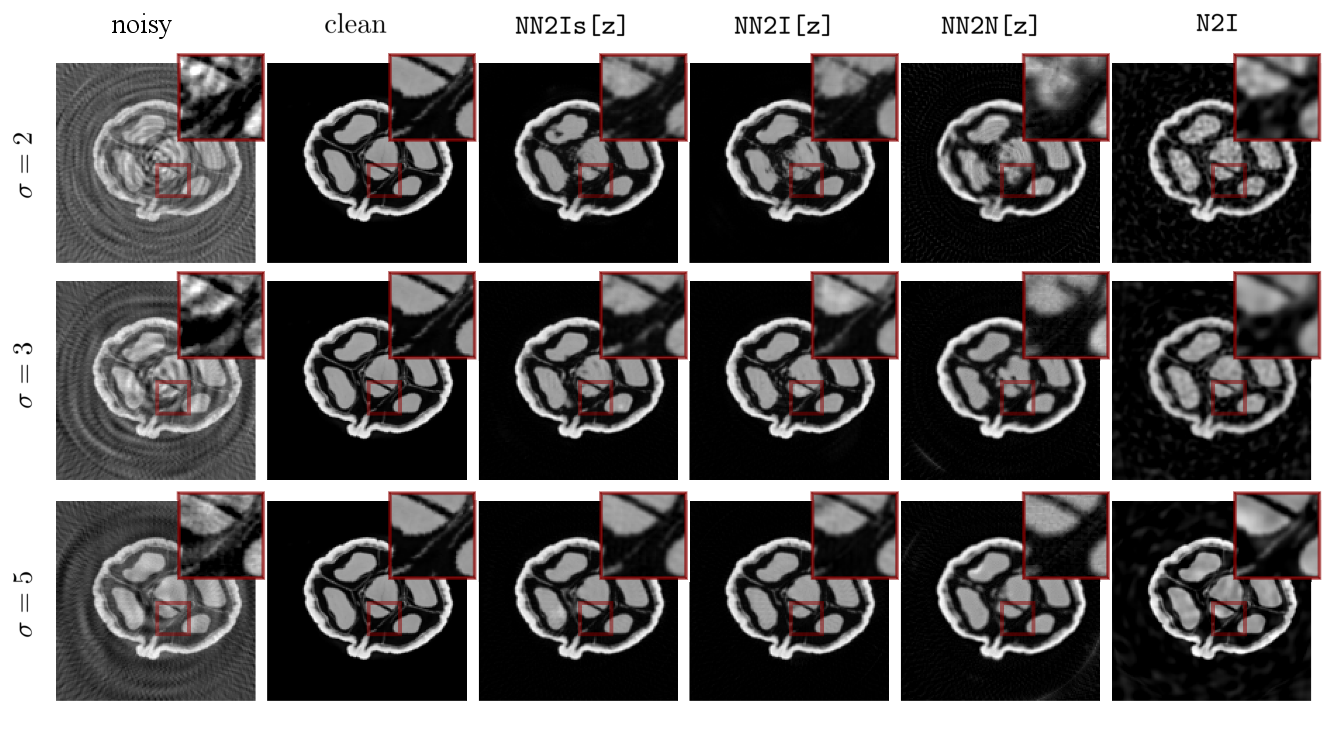}
    \caption{\textbf{Results obtained on sparse projection data consisting of 64 angular directions using the $z-$prediction method.} We can clearly observe the benefit of the one-step approaches compared to the twostep method.}
    \label{sparse_z}
\end{figure*}

\clearpage

\section{Supplementary Tables}

\begin{table*}[htb!]
    \setlength{\tabcolsep}{2.5pt} % Adjust the horizontal spacing between columns

    \centering
    \caption{PSNR and SSIM Values for the different methods tested on noisy samples with noise parameter $\cl$.}
    \resizebox{\linewidth}{!}{
    \begin{tabular}{cccccccc}
        \toprule
        Walnut $\cl =2$ & \nys & \nzs & \ny & \nz & \nyt & \nzt & \nii\\
        \midrule
        \textbf{PSNR} & \textbf{32.86} (1.72) & 32.25 (1.93) & 32.59 (1.74) & 31.99 (1.83) & 26.24 (0.24) & 28.62 (1.17) & 22.27 (0.38) \\
        \textbf{SSIM} & 0.93 (0.06) & \textbf{0.94} (0.04) & 0.92 (1.74) & 0.0.93 (0.04) & 0.34 (0.05) & 0.68 (0.03) & 0.24 (0.04) \\
        \bottomrule
            \toprule
        Heart CT $\cl =2$ & \nys & \nzs & \ny & \nz & $y$ & $z$ & N2I \\
        \midrule
        \textbf{PSNR} & 43.25 (0.62) & 42.26 (0.57) & \textbf{43.26} (0.94) & 42.66  (0.85) & 39.54 (0.22) & 35.85 (1.39) & 32.08 (0.95) \\
        \textbf{SSIM} & \textbf{0.99 (0.002)} & \textbf{0.99 (0.002)} & 0.99 (1.08) & 0.98 (0.001) & 0.89 (0.005) & 0.83 (0.02) & 0.66 (0.01) \\
        \bottomrule
                \toprule
        Walnut $\cl =3$ & \nys & \nzs & \ny & \nz & $y$ & $z$ & N2I \\
        \midrule
        \textbf{PSNR} & 34.38 (1.80) & 33.90 (1.92) & \textbf{34.47} (1.63) & 33.78 (1.65) & 27.97 (0.26) & 30.03 (1.16) &22.71 (0.89) \\
        \textbf{SSIM} & \textbf{0.96 (0.03)}& 0.95 (0.03) & 0.96 (1.63) & 0.95 (0.03) & 0.40 (0.05) & 0.71 (0.03) & 0.27 (0.04) \\
        \bottomrule
            \toprule
        Heart CT $\cl =3$ & \nys & \nzs & \ny & \nz & $y$ & $z$ & N2I \\
        \midrule
        \textbf{PSNR} & 44.21 (1.04) & 43.71 (0.91) & \textbf{44.48} (1.08) & 43.87 (1.07) & 41.07 (0.20) & 36.15 (1.17) & 33.09 (1.01) \\
        \textbf{SSIM} & \textbf{0.99 (0.001)} & 0.98 (0.001) & 0.98 (1.07) & 0.980 (0.001) & 0.87 (0.005) & 0.82 (0.02) & 0.63 (0.01) \\
        \bottomrule
                \toprule
        Walnut $\cl =5$ & \nys & \nzs & \ny & \nz & $y$ & $z$ & N2I \\
        \midrule
        \textbf{PSNR} &  \textbf{36.75} (1.75) & 36.36 (1.58) & 36.36 (1.90) & 36.05 (1.94) & 29.83 (0.22)& 30.19 (0.96)  & 24.48 (0.27) \\
        \textbf{SSIM} &  \textbf{0.97 (0.02)}& \textbf{0.97 (0.02)} & 0.97 (1.91) & 0.96 (0.02) & 0.48 (0.05) & 0.69 (0.02) & 0.32 (0.05) \\
        \bottomrule
            \toprule
        Heart CT $\cl = 5$ & \nys & \nzs & \ny & \nz & $y$ & $z$ & N2I \\
        \midrule
        \textbf{PSNR} & 45.09 (0.61) & 44.39 (0.54) & \textbf{45.22} (1.07) & 44.73 (0.92) & 42.20 (0.30) & 36.30 (1.17) & 32.080 (0.01)\\
        \textbf{SSIM} & \textbf{0.99 (0.002)} & \textbf{0.99 (0.002)} & 0.99 (1.08) & 0.98 (0.001) & 0.89 (0.005) & 0.83 (0.02) & 0.66 (0.01) \\
        \bottomrule
    \end{tabular}
    }
    \label{tab:psnr_ssim}
\end{table*}

\begin{table*}[htb!]
    \centering
    \caption{PSNR and SSIM Values for the different methods tested on noisy samples with noise parameter $\cl$ on sparse data consisting of 64 projections.}
    \resizebox{\linewidth}{!}{
    \begin{tabular}{ccccccc}
        \toprule
        Walnut $\cl =2$ & \nys & \nzs & \ny & \nz & \nyt & \nzt \\
        \midrule
        \textbf{PSNR} & \textbf{27.12} (1.62) & 25.34 (1.43) & 27.36 (1.70) & 25.71 (1.74) & 23.33 (0.58) & 20.88 (1.07)  \\
        \textbf{SSIM} & 0.81 (0.05) & \textbf{0.82} (0.03) & 0.84 (1.70) &  0.83 (0.03) & 0.31 (0.03) & 0.25 (0.02)  \\
        \bottomrule
                \toprule
        Walnut $\cl =3$ & \nys & \nzs & \ny & \nz & $y$ & $z$ \\
        \midrule
        \textbf{PSNR} & 28.49 (1.69) & 27.15 (1.75) & \textbf{28.55} (1.89) & 26.72 (1.72) & 24.79 (0.53) & 22.97 (1.29) \\
        \textbf{SSIM} & \textbf{0.82 (0.03)}& 0.79 (0.03) & 0.82 (1.89) & 0.79 (0.03) & 0.38 (0.03) & 0.40 (0.03) \\
        \bottomrule
                \toprule
        Walnut $\cl =5$ & \nys & \nzs & \ny & \nz & $y$ & $z$ \\
        \midrule
        \textbf{PSNR} &  \textbf{29.67} (1.70) & 29.15 (1.92) & 29.71 (1.50) & 29.05 (1.50) & 26.16 (0.63) & 23.51 (1.72) \\
        \textbf{SSIM} &  \textbf{0.81 (0.03)}& \textbf{0.79 (0.03)} & 0.82 (1.82) & 0.80 (0.03) & 0.43 (0.05) & 0.40 (0.04)  \\
        \bottomrule
    \end{tabular}
    }
    \label{tab:sparse}
\end{table*}

\end{appendix}

\end{document}